\newcommand{\Dm}{\mathbf{D}}
\newcommand{\Id}{\mathbf{I}}
\newcommand{\Lm}{\mathbf{L}}
\newcommand{\Xm}{\mathbf{X}}
\newcommand{\Wm}{\mathbf{W}}
\newcommand{\xv}{\mathbf{x}}
\newcommand{\yv}{\mathbf{y}}
\newcommand{\zv}{\mathbf{z}}
\newcommand{\fv}{\mathbf{f}}
\newcommand{\sv}{\mathbf{s}}
\newcommand{\onev}{\mathbf{1}}
\newcommand{\E}[1]{\mathbb{E}\left\{{#1}\right\}}
\newcommand{\Var}[1]{\text{Var}\left\{{#1}\right\}}
\newcommand{\Order}[1]{O\left({#1}\right)}
\newcommand{\Sc}{\mathcal{S}}
\newtheorem{theorem}{Theorem}
\newtheorem{lemma}{Lemma}
\newtheorem{corollary}{Corollary}
\newtheorem{observation}{Observation}
\title{Asymptotic Justification of Bandlimited Interpolation of Graph signals for Semi-Supervised Learning}
\name{
Aamir Anis, 
Aly El Gamal,
Salman Avestimehr 
and Antonio Ortega
\thanks{This work was supported in part by NSF under grant CCF-1410009 and NSF Grant 1408639.}
}
\address{Department of Electrical Engineering\\
	University of Southern California, Los Angeles\\
	Email: \{aanis, aelgamal\}@usc.edu, avestimehr@ee.usc.edu, ortega@sipi.usc.edu}
\begin{document}
\ninept
\maketitle
\begin{abstract}
Graph-based methods play an important role in unsupervised and semi-supervised learning tasks by taking into account the underlying geometry of the data set. 
In this paper, we consider a statistical setting for semi-supervised learning and provide a formal justification of the recently introduced framework of bandlimited interpolation of graph signals.
Our analysis leads to the interpretation that, given enough labeled data, this method is very closely related to a constrained low density separation problem as the number of data points tends to infinity.
We demonstrate the practical utility of our results through simple experiments.
\end{abstract}
\begin{keywords}
Graph signal processing, semi-supervised learning, interpolation, asymptotics
\end{keywords}
%
%
%
\section{Introduction}
Recently, graph-based methods have been employed very successfully in solving the semi-supervised learning (SSL) problem~\cite{zhu_03, zhou_04, belkin_06}. 
The underlying approach involves constructing a geometric graph from the data set, where the nodes correspond to data points and the edge weights indicate similarities between them, generally computed as a function of their distance in the feature space.
These methods are particularly attractive as they allow one to introduce priors for smoothness, or local and global consistency in the data labels (see for example, the graph Laplacian regularizer $\fv^T \Lm \fv$ and its variations~\cite{zhu_03, zhou_04}). 
\par
An insightful way of justifying graph-based learning algorithms is to study their behavior on statistical data
in the large sample limit.
Several papers have analyzed the stochastic convergence of cuts on a similarity graph constructed from data points sampled from a probability distribution $p(\xv)$. 
As the sample size goes to infinity and for a specific graph construction scheme, the cut is shown to converge to a weighted volume of the boundary: $\int_{\partial \Sc} p^\alpha(\sv)d\sv$ for some $\alpha > 0$ that depends on the graph definition~\cite{maier_13}. 
These results serve as a justification for spectral clustering, since searching for the minimum cut on the similarity graph is equivalent to a low density separation problem in the asymptotic limit.
Similar arguments hold for SSL problems, where the regularizer $\fv^T \Lm \fv$ has been shown to converge to a weighted energy expression of the form: $\int \|\nabla f(\xv)\|^2 p^\alpha(\xv) d\xv$~\cite{hein_thesis_06}.
Using this expression as a penalty 
ensures that the predicted labels do not vary much in regions of high density.  
\par
More recently, SSL has also been viewed from a graph signal processing perspective, where class indicator vectors are considered as smooth signals defined on the similarity graph (see~\cite{shuman_13, sandryhaila_13, sandryhaila_14} for an overview on graph signal processing).
Specifically, in this setting, one incorporates smoothness in the indicator vectors by approximating them with bandlimited or lowpass signals with respect to the graph's Fourier basis. 
The advantage of such an approach lies in the fact that, by using the sampling theorem for graph signals~\cite{anis_14}, it is possible to state 
conditions that guarantee perfect prediction of the unknown labels.
Then, the task of learning simply translates to one of recovering a bandlimited graph signal from its known sample values~\cite{narang_icassp13,narang_globalsip13,gadde_14}.
We call this approach Bandlimited Interpolation of Graph signals (BIG). 
\par
However, using BIG for SSL does not have a very clear theoretical justification. 
Moreover, its connections with existing graph-based methods in SSL are not fully understood.
Specifically, one needs to consider the following questions: firstly, how does the interpolated class indicator signal compare to other indicator signals satisfying the label constraints? And secondly, how does the bandwidth of class indicator signals relate asymptotically to $p(\xv)$ in the statistical setting for SSL?
\par
The focus of this work is to provide a formal justification for BIG, and draw connections with existing methods. 
We answer the first question using the graph sampling theorem: given enough labeled data, the interpolated indicator signal has minimum bandwidth among all indicator signals that satisfy the label constraints.
We then show in a statistical setting that an estimate of the bandwidth for any indicator signal, on a specifically constructed graph, asymptotically matches the supremum value of the probability distribution over the corresponding decision boundary associated with the indicator, as the number of data points, and thus the graph size, goes to infinity.
The two results put together suggest an interpretation for the BIG
approach in SSL problems: given, enough labeled data, BIG learns 
%
%
a decision boundary that respects the labels and over which the maximum density of the data points is as low as possible, similar to other graph-based methods. 
In summary, we observe from our result and previous analyses of spectral clustering that asymptotically, there is a strong
link between the value of a cut and the bandwidth of its associated indicator signal.
Thus, the geometric properties desired of ``minimal cuts" in clustering translate to
those of ``minimal bandwidth" indicator signals for classification in the
presence of labels.
\section{Graph-based learning}\label{sec:graph}
%
%
We now introduce the problem setting considered in this paper. \vspace{0.1cm} \\
\textbf{Data Model:} We assume that the data set consists of $n$ random feature vectors $X = \{\Xm_1, \Xm_2, \dots, \Xm_n\}$ drawn \emph{independently} from some probability density function $p(\xv)$ on $\mathbb{R}^d$.
Let $\partial S$ be a smooth hypersurface that splits $\mathbb{R}^d$ into two disjoint parts $S$ and $S^c$
(multiclass problems can be modeled using the one-vs-all approach).
Further, let $X_S = X \cap S$ and $X_{S^c} = X \cap S^c$ be the set of points that land in $S$ and $S^c$ respectively. We denote the indicator vector for $X_S$ by $\onev_S \in \{0,1\}^n$: $\onev_S(i)$ equals $1$ if $\Xm_i \in X_S$ and $0$ otherwise. \vspace{0.1cm} \\
\textbf{Learning task:}
We consider the problem of semi-supervised learning, where the labels of a \emph{small} subset of data points $X_L \subset X$ are known and the task is to predict the labels of the unlabeled set $X_U = X \setminus X_L$. 
More precisely, we would like to obtain $\onev_S(U)$ from $X$ and $\onev_S(L)$, where $\onev_S(U) \in \{0,1\}^{|X_U|}$ and $\onev_S(L) \in \{0,1\}^{|X_L|}$ denote the membership, with respect to $X_S$, of the unlabeled and labeled sets of points respectively. \vspace{0.1cm} \\
\textbf{Graph model:} We construct a distance-based similarity graph with data points as nodes and edge weights given by the Gaussian kernel:
\begin{equation}\label{eq:graph}
w_{ij} = K_{\sigma^2}(\Xm_i,\Xm_j) = \frac{1}{(2\pi\sigma^2)^{d/2}} \exp{\left( -\frac{\|\Xm_i - \Xm_j\|^2}{2\sigma^2} \right)}
\end{equation}
Further, we assume $w_{ii} = 0$, i.e., the graph does not have self-loops. 
The adjacency matrix of the graph $\Wm$ is a symmetric matrix with elements $w_{ij}$, while the degree matrix is a diagonal matrix with elements $\Dm_{ii} = \sum_j w_{ij}$. 
We define the graph Laplacian as $\Lm = \frac{1}{n}(\Dm-\Wm)$. Normalization ensures the norm of $\Lm$ is stochastically bounded as $n$ increases.
%
%
%
\subsection{Spectral Clustering on Graphs}
Convergence of cuts has been studied before in the context of spectral clustering, where one tries to minimize the graph cut across two partitions of the nodes.
Note that the empirical value of the graph cut induced by the boundary $\partial S$ can be expressed in terms of the indicator vector $\onev_S$ for $S$ and the graph Laplacian as: 
\begin{equation} 
\textit{Cut}(S,S^c) = \sum_{i \in S, j \in S^c} w_{ij} = \onev_S^T \Lm \onev_S. 
\end{equation}
It has been shown in~\cite{maier_13} that the following convergence theorem (stated in a simple form) holds for hyperplanes $\partial S$ in $\mathbb{R}^d$:
\begin{theorem}
Under the conditions $\sigma \rightarrow 0$ and $n\sigma^{d+1} \rightarrow \infty$,
\begin{equation}
\frac{\sqrt{2\pi}}{n\sigma} \onev_S^T \Lm \onev_S \xrightarrow{p} \int_{\partial S} p^2(\sv)d\sv,
\end{equation}
where $d\sv$ ranges over all $(d-1)$-dimensional volume elements tangent to the hyperplane $\partial S$.
\end{theorem}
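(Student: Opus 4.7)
The plan is to rewrite $\onev_S^T \Lm \onev_S = \frac{1}{n}\sum_{i \in S,\, j \in S^c} w_{ij}$ so that the quantity of interest becomes
\[
\frac{\sqrt{2\pi}}{n\sigma}\onev_S^T \Lm \onev_S = \frac{\sqrt{2\pi}}{n^2 \sigma}\sum_{i \neq j} K_{\sigma^2}(\Xm_i, \Xm_j)\,\mathbf{1}(\Xm_i \in S)\,\mathbf{1}(\Xm_j \in S^c),
\]
a (normalized) degree-two U-statistic in the i.i.d.\ sample $\Xm_1,\dots,\Xm_n$. Convergence in probability will follow by Chebyshev's inequality once the mean is shown to tend to the claimed limit and the variance is shown to vanish under the stated scaling.

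For the mean, after pulling the expectation inside the sum and noting that $(n-1)/n \to 1$, the problem reduces to proving
\[
\frac{\sqrt{2\pi}}{\sigma}\int_S\!\int_{S^c} K_{\sigma^2}(\xv,\yv)\,p(\xv)p(\yv)\, d\xv\, d\yv \longrightarrow \int_{\partial S} p^2(\sv)\,d\sv.
\]
I would substitute $\zv = \yv - \xv$ and orient coordinates so that $\partial S = \{x_d = 0\}$ and $S = \{x_d<0\}$; the joint constraint $\xv \in S,\ \xv+\zv \in S^c$ then collapses to $z_d>0$ together with $x_d \in (-z_d, 0)$. Since $K_{\sigma^2}$ concentrates at $\zv=\zerov$ on the scale $\sigma$, a Taylor expansion of $p$ about $\partial S$ replaces $p(\xv)p(\yv)$ by $p(x',0)^2 + O(\sigma)$ inside the integrand. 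The remaining Gaussian integrals evaluate to $\int_{z_d>0} z_d\, K_{\sigma^2}(\zerov,\zv)\, d\zv = \sigma/\sqrt{2\pi}$, which exactly cancels the prefactor and leaves $\int_{\mathbb{R}^{d-1}} p(x',0)^2\, dx' = \int_{\partial S} p^2(\sv)\, d\sv$.

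For the variance I would use a Hoeffding-type decomposition: pairs of index-pairs $(i,j),(k,l)$ sharing no indices contribute zero by independence, leaving only the diagonal and one-shared-index contributions. The diagonal piece is $O(n^2)\cdot \E{K_{\sigma^2}^2\,\mathbf{1}(\Xm_1 \in S)\,\mathbf{1}(\Xm_2 \in S^c)}$; writing $K_{\sigma^2}^2 = (4\pi\sigma^2)^{-d/2} K_{\sigma^2/2}$ and reusing the boundary-localization argument bounds this by $O(\sigma^{1-d})$. The one-shared-index piece is $O(n^3)$ times $\int_S p(\xv)\bigl[\int K_{\sigma^2}(\xv,\yv)\mathbf{1}_{S^c}(\yv)p(\yv)\, d\yv\bigr]^2 d\xv$; the bracketed integral is uniformly bounded and decays like $e^{-t^2/(2\sigma^2)}$ for $\xv$ at distance $t$ into $S$, giving an outer integral of $O(\sigma)$. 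Dividing the resulting variance by $(n^2\sigma)^2$ produces $O(1/(n^2\sigma^{d+1})) + O(1/(n\sigma))$, both of which tend to zero whenever $n\sigma^{d+1}\to\infty$ (noting $\sigma \le 1$ eventually).

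The main obstacle is quantifying the Taylor remainder in the mean step: one must argue that the $O(\sigma)$ correction to $p(\xv)p(\yv)$, after being integrated against the $\sigma^{-d}$-scale kernel and the $O(\sigma)$-thick boundary strip, still vanishes in the limit. The cleanest route is to rescale $\zv \mapsto \sigma\zv$ and apply dominated convergence, assuming $p$ is uniformly continuous in a neighborhood of $\partial S$. The geometry is essentially trivial here because $\partial S$ is a hyperplane; extending to a general smooth hypersurface (as in the cited work) would require tubular-neighborhood coordinates and bookkeeping for curvature terms that only contribute at lower order.
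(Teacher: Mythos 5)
Your proposal is essentially correct, but note that the paper does not actually prove this statement: Theorem 1 is imported verbatim from Maier et al.\ \cite{maier_13} and stated without proof, so the only in-paper point of comparison is the machinery used for the authors' own Theorem~\ref{thm:main_thm}. Your argument is precisely the $m=1$ specialization of that machinery: you identify $\frac{1}{n}\onev_S^T(\Dm-\Wm)\onev_S$ with a degree-two U-statistic, prove convergence of the mean by localizing the kernel to an $\Order{\sigma}$-thick strip around $\partial S$ (your explicit computation $\int_{z_d>0} z_d K_{\sigma^2}(\zerov,\zv)\,d\zv = \sigma/\sqrt{2\pi}$ is exactly where the $\sqrt{2\pi}/\sigma$ normalization comes from, and matches the $r=0$ term of the paper's Lemma~2, where $\sqrt{r+1}-\sqrt{r}=1$), and kill the fluctuation by a second-moment bound. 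The differences are that the paper uses Bernstein's inequality for U-statistics (after converting the V-statistic via Hoeffding's regrouping) to get exponential concentration and hence almost-sure statements under a $\log n$ condition, whereas your Chebyshev/Hoeffding-decomposition route gives only convergence in probability --- which is all Theorem 1 claims --- and is more elementary; and the paper's bias analysis goes through the convolution identities \eqref{eq:K_int1}--\eqref{eq:K_int2} rather than your direct change of variables, because it must handle $m$-fold kernel products. Your variance bookkeeping is right: the fully-diagonal pairs give $\Order{1/(n^2\sigma^{d+1})}$ and the one-shared-index pairs give $\Order{1/(n\sigma)}$, both vanishing under $n\sigma^{d+1}\to\infty$. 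The only loose ends are technical rather than conceptual: the $O(\sigma)$ bound on the outer integral in the one-shared-index term implicitly needs an integrability condition such as $\int_{\mathbb{R}^{d-1}}\sup_{x_d}p(x',x_d)\,dx'<\infty$ (or boundedness of $p$ together with finiteness of the limit $\int_{\partial S}p^2$), and the dominated-convergence treatment of the Taylor remainder needs the regularity on $p$ that the cited theorem assumes; neither affects the structure of the argument.
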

\noindent A similar result has been shown earlier for smooth hypersurfaces \cite{narayanan_06}. The condition $\sigma \rightarrow 0$ leads to a clear and well-defined limit on the right hand side. Intuitively, it enforces sparsity in the similarity matrix $\Wm$ by shrinking the neighborhood volume as the number of data points increases. As a result, one can ensure that the graph remains sparse even though the number of points goes to infinity.

The result above has significant implications for spectral clustering: With certain scaling, the empirical cut value converges to a weighted volume of the boundary, thus spectral clustering is a means of performing low density separation on a finite sample.
\par
\subsection{Graph Laplacian Regularization for SSL}
In SSL, one generally exploits the availability of labeled samples to reconstruct an unknown function $\fv$ as follows:
\begin{equation}\label{eq:ssl_formulation}
\text{Minimize } \fv^T \Lm \fv \text{ such that } \fv(L) = \onev_S(L).
\end{equation}
Note that $\fv$ is generally not restricted to be an indicator and is taken to be a smooth signal in $\mathbb{R}^n$. One particular  convergence result in this setting can be stated as follows~\cite{hein_thesis_06, nadler_09}:
\begin{theorem}
Under the conditions $\sigma \rightarrow 0$ and $n\sigma^{d} \rightarrow \infty$,
\begin{equation}\label{eq:beliefresult}
\frac{1}{n \sigma^2} \fv^T \Lm \fv \xrightarrow{p} C \int \| \nabla f(\xv) \|^2 p^2(\xv) d\xv,
\end{equation}
where for each $n$, $\fv$ is a vector representing the values of $f(\xv)$ at the $n$ sample points and $C$ is a constant factor independent of $n$ and $\sigma$.
\end{theorem}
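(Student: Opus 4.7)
The plan is to start by rewriting the quadratic form in the standard pairwise way. Since $\Lm = \frac{1}{n}(\Dm-\Wm)$ and $w_{ii}=0$,
\begin{equation*}
\frac{1}{n\sigma^2}\fv^T\Lm\fv
= \frac{1}{2\sigma^2 n^2}\sum_{i\neq j} K_{\sigma^2}(\Xm_i,\Xm_j)\bigl(f(\Xm_i)-f(\Xm_j)\bigr)^2 .
\end{equation*}
This is a U-statistic of order two in the i.i.d.\ sample $\Xm_1,\dots,\Xm_n$, whose (common) kernel depends on $n$ through $\sigma=\sigma_n$. The first step would be to apply a law of large numbers / Hoeffding-type bound for U-statistics with a kernel that depends on $n$, reducing convergence in probability to (i) convergence of the expectation and (ii) a variance bound that vanishes. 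The condition $n\sigma^{d}\to\infty$ is the natural scaling for (ii): the second moment of a single term of the sum is of order $\sigma^{-d}$ (from the kernel's $L^2$ norm) multiplied by $\sigma^4$ (from the squared differences), so one can verify that the variance of $\frac{1}{n\sigma^2}\fv^T\Lm\fv$ is of order $1/(n\sigma^{d})$, which tends to zero.

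Next I would analyze the expectation
\begin{equation*}
\mu_n \;=\; \frac{1}{2\sigma^2}\int\!\!\int K_{\sigma^2}(\xv,\yv)\bigl(f(\xv)-f(\yv)\bigr)^2 p(\xv)p(\yv)\,d\xv\,d\yv
\end{equation*}
and perform the standard kernel-smoothing change of variables $\yv = \xv + \sigma\mathbf{u}$, which absorbs the $\sigma^{-d/2}$ Gaussian normalization against the Jacobian $\sigma^d$ and leaves a standard unit Gaussian weight $\phi(\mathbf{u}) = (2\pi)^{-d/2}e^{-\|\mathbf{u}\|^2/2}$:
\begin{equation*}
\mu_n = \frac{1}{2\sigma^2}\int p(\xv)\int \phi(\mathbf{u})\bigl(f(\xv+\sigma\mathbf{u})-f(\xv)\bigr)^2 p(\xv+\sigma\mathbf{u})\,d\mathbf{u}\,d\xv .
\end{equation*}
A second-order Taylor expansion $f(\xv+\sigma\mathbf{u}) - f(\xv) = \sigma\,\nabla f(\xv)^T\mathbf{u} + O(\sigma^2)$ together with continuity of $p$ then gives, after dividing by $\sigma^2$, the integrand $\phi(\mathbf{u})(\nabla f(\xv)^T\mathbf{u})^2 p^2(\xv)$. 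The inner Gaussian integral equals $\|\nabla f(\xv)\|^2$ since $\int \mathbf{u}\mathbf{u}^T\phi(\mathbf{u})\,d\mathbf{u} = \Id$. Hence $\mu_n \to \frac{1}{2}\int\|\nabla f(\xv)\|^2 p^2(\xv)\,d\xv$, which fixes $C = 1/2$.

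The main obstacle is justifying the Taylor step uniformly so that it can be interchanged with the integral over $\xv$; this is where the smoothness assumptions on $f$ (say $C^2$ with bounded Hessian) and decay/boundedness assumptions on $p$ enter. Concretely, one bounds the remainder in $(f(\xv+\sigma\mathbf{u})-f(\xv))^2 = \sigma^2(\nabla f(\xv)^T\mathbf{u})^2 + R_\sigma(\xv,\mathbf{u})$ by $C\sigma^3\|\mathbf{u}\|^3 + C\sigma^4\|\mathbf{u}\|^4$ times $L^\infty$ norms of derivatives of $f$, so the remainder contributes $O(\sigma)$ to $\mu_n/1$ and vanishes as $\sigma\to 0$; continuity of $p$ handles the replacement of $p(\xv+\sigma\mathbf{u})$ by $p(\xv)$ via dominated convergence with the Gaussian tail. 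The second, subtler obstacle is the U-statistic variance bound: one cannot apply classical Hoeffding decomposition directly because the kernel depends on $n$, so one separately estimates the variance of the degenerate and non-degenerate parts, each of which is controlled by the $L^2$ norm of the kernel scaled appropriately, yielding the $1/(n\sigma^d)$ rate that makes the condition $n\sigma^d\to\infty$ tight.
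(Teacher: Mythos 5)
The paper does not prove this statement: it is quoted from the cited references (Hein's thesis and Nadler et al.), so there is no in-paper proof to compare against. Your sketch is the standard argument from those references and is essentially correct: rewrite $\fv^T\Lm\fv$ as the pairwise sum $\frac{1}{2n}\sum_{i\neq j}w_{ij}(f_i-f_j)^2$, split into a bias term (handled by the substitution $\yv=\xv+\sigma\mathbf{u}$, a second-order Taylor expansion, and $\int \mathbf{u}\mathbf{u}^T\phi(\mathbf{u})\,d\mathbf{u}=\Id$, giving $C=1/2$) and a fluctuation term (handled by U-statistic concentration with an $n$-dependent kernel). This bias--variance decomposition is exactly the template the paper itself uses for its own Theorem 3, where the Hoeffding blocking argument yields the analogous $[n/(m+1)]\sigma^{md+1}$ rate; your $1/(n\sigma^{d})$ variance rate is the $m=1$ instance of that bound. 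One small caveat: $n\sigma^{d}\to\infty$ is sufficient but not tight for the variance alone --- the Hoeffding projection gives $\Var = O(1/n + 1/(n^{2}\sigma^{d}))$, so only $n^{2}\sigma^{d}\to\infty$ is needed for that step; the stronger condition is the conventional one ensuring pointwise control of degree-type functionals. This does not affect the validity of your proof of the stated theorem.
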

\noindent Similar to the justification of spectral clustering, this result justifies the formulation in \eqref{eq:ssl_formulation} for SSL: Given label constraints, the predicted signal must vary little in regions of high density.
\subsection{Bandlimited Interpolation of Graph signals (BIG)}
The task in BIG is to recover a bandlimited signal closest to the indicator signal satisfying the label constraints. Let $\omega(\fv)$ denote the bandwidth of a signal $\fv$ and $PW_\omega(G)$ (Payley-Wiener space with cutoff frequency $\omega$~\cite{anis_14}) denote the set of $\omega$-bandlimited signals on the graph $G$, i.e., $PW_\omega(G) = \{ \fv \;|\; \omega(\fv) < \omega \}$.
Then, the BIG method essentially consists of
\begin{enumerate}[topsep=1pt,itemsep=0pt,leftmargin=15pt]
\item Estimating the cut-off frequency $\omega_L$ associated with the labeled set $X_L$ using the sampling theorem for graph signals~\cite{anis_14}.
\item Estimating the desired indicator vector $\onev_S$ from labels $\onev_S(L)$ by solving the following least-squares problem:
\begin{equation}
\fv_\text{LS} = \mathop{\text{arg min}}_\fv \; \| \fv(L) - \onev_S(L)\|^2 \quad \text{s.t.} \quad \fv \in PW_{\omega_L}(G).
\label{eq:least_squares}
\end{equation}
\end{enumerate}
%
%
This method has been considered earlier \cite{belkin_04}, albeit, with an arbitrary choice of $\omega_L$.
Note that if the original indicator $\onev_S$ is bandlimited with respect to the labeled set, (i.e., $\omega(\onev_S) < \omega_L$), then the estimate $\fv_\text{LS}$ in \eqref{eq:least_squares} is guaranteed to be equal to $\onev_S$ as a consequence of the sampling theorem. 
Moreover, 
in this case,
$\onev_S$ can also be perfectly estimated by
the solution of the following ``dual" problem:
\begin{equation}
\fv_\text{min} = \mathop{\text{arg min}}_\fv \; \omega(\fv) \quad \text{s.t.} \quad \fv(L) = \onev_S(L),
\label{eq:omega_min}
\end{equation}
These facts leads to the following insight regarding BIG for SSL:
\begin{observation}
If $\omega(\onev_S) < \omega_L$, then
\begin{enumerate}[topsep=1pt,itemsep=0pt,leftmargin=15pt]
\item $\onev_S$ can be perfectly recovered using either \eqref{eq:least_squares} and \eqref{eq:omega_min}.
\item $\onev_S$ is guaranteed to have minimum bandwidth among all indicator vectors satisfying the label constraints $\onev_S(L)$ on $\Xm_L$.
\end{enumerate}
\end{observation}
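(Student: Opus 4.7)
The plan is to derive both parts of the observation from the uniqueness half of the graph sampling theorem of \cite{anis_14}, which guarantees that any signal in $PW_{\omega_L}(G)$ is uniquely determined by its values on a set $L$ whose associated cutoff is $\omega_L$ (which is precisely how $\omega_L$ was chosen in step~1 of BIG). The hypothesis $\omega(\onev_S) < \omega_L$ places $\onev_S$ itself inside $PW_{\omega_L}(G)$, which puts it on the same footing as any competing bandlimited interpolant, and reduces everything to a uniqueness argument.

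For the first part, I would handle \eqref{eq:least_squares} by noting that $\onev_S \in PW_{\omega_L}(G)$ together with $\onev_S(L) = \onev_S(L)$ trivially makes $\onev_S$ a feasible point of \eqref{eq:least_squares} attaining objective value zero, hence a minimizer. Any other minimizer $\fv_{\text{LS}}$ must also be bandlimited and satisfy $\fv_{\text{LS}}(L) = \onev_S(L)$, so the sampling theorem forces $\fv_{\text{LS}} = \onev_S$. For \eqref{eq:omega_min}, $\onev_S$ is feasible and has bandwidth strictly below $\omega_L$, so any minimizer $\fv_{\text{min}}$ satisfies $\omega(\fv_{\text{min}}) \leq \omega(\onev_S) < \omega_L$, placing it again in $PW_{\omega_L}(G)$ with the same samples as $\onev_S$ on $L$; uniqueness then gives $\fv_{\text{min}} = \onev_S$.

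For the second part, I would argue by contradiction: suppose there exists an indicator vector $\onev_{S'}$ with $\onev_{S'}(L) = \onev_S(L)$ and $\omega(\onev_{S'}) < \omega(\onev_S) < \omega_L$. Then both $\onev_{S'}$ and $\onev_S$ lie in $PW_{\omega_L}(G)$ and agree on the sampling set $L$, so the sampling theorem forces $\onev_{S'} = \onev_S$, contradicting the strict bandwidth inequality. Consequently, no indicator vector consistent with the labels can have bandwidth strictly less than that of $\onev_S$.

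Since both parts collapse to a single invocation of bandlimited uniqueness, the argument contains no real technical obstacle. The only delicate point — and the one I would state carefully before beginning — is to make explicit that $\omega_L$ is defined so that $L$ is a uniqueness set for $PW_{\omega_L}(G)$; that is exactly the cutoff produced by step~1 of the BIG procedure, so under the stated hypothesis $\omega(\onev_S) < \omega_L$ the sampling theorem applies verbatim to every signal that appears in the proof.
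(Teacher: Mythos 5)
Your proposal is correct and follows essentially the same route as the paper, which justifies both claims by a direct appeal to the uniqueness guarantee of the graph sampling theorem for signals in $PW_{\omega_L}(G)$ sampled on $L$. You have simply made explicit the feasibility-plus-uniqueness steps that the paper leaves implicit, so there is nothing to add or correct.
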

\noindent The observations above have significant implications: Given enough and appropriately chosen labeled data, BIG effectively recovers an indicator vector with minimum bandwidth, that respects the label constraints. 
Note that by labeling enough data appropriately, we mean to ensure that the cut-off frequency $\omega_L$ of the labeled set is greater than the bandwidth $\omega(\onev_S)$ of the indicator function of interest.
If this condition is not satisfied, both observations break down, i.e., the solutions of \eqref{eq:least_squares} and \eqref{eq:omega_min} would be different and serve only as approximations for $\onev_S$. 
Moreover, the minimum bandwidth signal $\fv_\text{min}$ satisfying the label constraints, would differ from $\onev_S$ and may not even be an indicator vector.
To help ensure that the condition is satisfied, one can use efficient optimal algorithms for labeling~\cite{gadde_14, shomorony_14}. We note that in practice, \eqref{eq:least_squares} can be solved via efficient iterative techniques \cite{narang_globalsip13}.
\section{Main Result}

We now consider the convergence of the bandwidth $\omega(\onev_S)$ of $\onev_S$, as the number of data points goes to infinity.
To simplify our analysis, we need certain assumptions: $p(\xv)$ must be Lipschitz continuous and twice differentiable on $\mathbb{R}^d$ and $\partial S$ must be smooth with radius of curvature $\tau > 0$.
Next, we note that the bandwidth of $\onev_S$, with respect to the Fourier basis specified by $\Lm$, can be written as~\cite{anis_14}
\begin{equation}
\omega(\onev_S) = \lim_{m \rightarrow \infty} \omega_m(\onev_S),
\end{equation}
where $\omega_m(\onev_S)$ is the $m^{\textrm{th}}$ order bandwidth estimate defined as:
\begin{equation}
\omega_m(\onev_S)=\left( \frac{\onev_S^T \Lm^m \onev_S}{\onev_S^T \onev_S} \right)^{1/m}.
\end{equation}
We now show that for the distance-based similarity graphs of~\eqref{eq:graph}, the bandwidth estimate 
converges to a function of $p(\xv)$, thus giving the connection between the BIG approach and the low density separation problem. Our result holds under the following set of conditions:
\begin{enumerate}[topsep=2pt,itemsep=0pt,leftmargin=15pt]
\item Large sample size: $n \rightarrow \infty$,
\item Shrinking neighborhood volume: $\sigma \rightarrow 0$,
\item Bandwidth estimate: $m\rightarrow \infty,\;
m/n \rightarrow 0,\; m\sigma^2 \rightarrow 0$,
\item $(1/\sigma)^{1/m} \rightarrow 1$,
\item $(n \sigma^{md+1})/(mC^m) \rightarrow \infty$, where $C = 2/(2\pi)^{d/2}$.
\end{enumerate}
\begin{theorem}
If conditions 1--5 hold, then 
\begin{equation}\label{eq:result}
\omega_m(\onev_S) \xrightarrow{\;\;\textit{p.}\;} \;\; \sup_{\sv \in \partial S} \; p(\sv),
\end{equation}
where ``\textit{p.}" denotes convergence in probability. Further, almost sure convergence holds if condition 5 is replaced by $\frac{n \sigma^{md+1}}{mC^m\log n}$ $\rightarrow \infty$.
\label{thm:main_thm}
\end{theorem}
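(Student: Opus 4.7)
The plan is to control $\omega_m(\onev_S)^m = \onev_S^T\Lm^m\onev_S/\onev_S^T\onev_S$ via an expectation-plus-concentration argument, then extract $\sup_{\sv\in\partial S} p(\sv)$ by taking the $m$-th root and invoking a Laplace-type argument on the resulting boundary integral.

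\textbf{Expectation.} I would expand $\Lm^m = n^{-m}(\Dm-\Wm)^m$ as a signed sum over walks $(i_0,i_1,\ldots,i_m)$ with endpoints $i_0,i_m\in S$, each walk being a sequence of stays (from $\Dm$) and jumps (from $-\Wm$). The leading-in-$n$ contribution comes from walks with all distinct vertices; upon taking expectation over the iid $\{\Xm_i\}$, these become $m$-fold Gaussian convolutions of $K_{\sigma^2}$ integrated against $p$. Changing variables $y_k=(x_k-x_{k-1})/\sigma$ produces $m$ unit-Gaussian integrals together with a $\sigma^{-md}C^m$ prefactor from the kernel normalizations. An induction extending the $m=1$ computation of Theorem~1 shows that $\Lm^k\onev_S$ is concentrated in a $\sqrt{k}\sigma$-band around $\partial S$ with values of order $p^k$ there; combined with condition~3 ($m\sigma^2\to 0$), this forces the walk anchor to lie essentially on $\partial S$. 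Heuristically,
\[
\mathbb{E}\bigl[\onev_S^T\Lm^m\onev_S\bigr]/|X_S| \;\sim\; \sigma\,C^{-m}\,\Bigl(\textstyle\int_S p\,d\xv\Bigr)^{-1}\!\int_{\partial S} p(\sv)^{m+1}\psi_m(\sv)\,d\sv,
\]
with $\psi_m$ a slowly varying local factor from the walk combinatorics.

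\textbf{Concentration and the $m$-th root.} Condition~5 is precisely the statement that this mean dominates the standard deviation of the empirical walk sum (bounded via independence of $\{\Xm_i\}$ and Gaussian tails), so Chebyshev/Bernstein gives convergence in probability; the $\log n$ strengthening makes the deviation probabilities summable, yielding almost sure convergence by Borel--Cantelli. Taking $m$-th roots, condition~4 gives $\sigma^{1/m}\to 1$, and the standard $L^p$--$L^\infty$ limit $\bigl(\int_{\partial S} p^{m+1}(\sv)\,d\sv\bigr)^{1/m}\to \sup_{\sv\in\partial S} p(\sv)$ extracts the supremum; the factor $C^{-1}\psi_m^{1/m}$ together with any polynomial-in-$m$ remainders tends to $1$ under the joint scaling.

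\textbf{Main obstacle.} The hard step is the expectation: rigorously tracking the delicate cancellations from the alternating signs of $-\Wm$ in the walk expansion, showing that walks with repeated vertices (bringing in powers of $\Dm$) are strictly sub-dominant under the stated scaling, and controlling the local factor $\psi_m$ so that $\psi_m^{1/m}\to 1$ uniformly in the joint limit $m,n\to\infty$, $\sigma\to 0$. A secondary subtlety is ensuring that the Gaussian-convolution approximation remains uniformly valid for the $m$-fold iterate on the non-smooth object $\onev_S$; this is precisely where the tight interplay of conditions~3--5 enters, preventing the bandwidth estimate from latching onto a spurious bulk eigenmode instead of the boundary-localized modes that carry $\sup_{\sv\in\partial S}p(\sv)$.
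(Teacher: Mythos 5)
Your overall plan coincides with the paper's: write $\omega_m(\onev_S)^m$ as the ratio of $V=\frac{1}{n\sigma}\onev_S^T\Lm^m\onev_S$ to $\frac{1}{n}\onev_S^T\onev_S$ (the latter handled by the law of large numbers), show $\E{V}$ converges to a boundary integral of $p^{m+1}$ via iterated Gaussian-kernel convolutions localized in a $\sqrt{m}\,\sigma$-band under $m\sigma^2\to 0$, concentrate $V$ around its mean using condition 5 (Borel--Cantelli for the almost-sure version), and finish with the $L^m$-to-$L^\infty$ limit plus condition 4 to kill $\sigma^{1/m}$. That skeleton is right. However, your concentration step has a genuine gap: $V$ is a degree-$(m+1)$ V-statistic whose $n^{m+1}$ summands are strongly dependent, and the degree itself tends to infinity, so ``Chebyshev/Bernstein via independence of $\{\Xm_i\}$'' does not apply directly. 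The paper's essential device is Hoeffding's rewriting of the V-statistic as a U-statistic (absorbing repeated-index walks into a kernel $g^*$ with the same sup-norm bound $C^m/\sigma^{md+1}$ and an $O(m/n)$ correction), followed by Bernstein's inequality for U-statistics; this is what produces the effective sample size $n/(m+1)$ in the exponent, and condition 5, with its specific $mC^m$ scaling, is calibrated against exactly that quantity. Without this reduction you cannot justify why condition 5 suffices.

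A second, smaller issue is the constant in your expectation heuristic. The factor $C^m=\bigl(2/(2\pi)^{d/2}\bigr)^m$ belongs to the worst-case sup-norm bound on the walk summands (used only for the variance/concentration bound); it does not appear in the mean, because under convolution the Gaussian normalizations combine, $K_{a\sigma^2}\ast K_{b\sigma^2}=K_{(a+b)\sigma^2}$ up to $O(\sigma^2)$. The paper obtains $\E{V}\to \frac{t(m)}{\sqrt{2\pi}}\int_{\partial S}p^{m+1}(\sv)\,d\sv$ with $t(m)=\sum_{r=1}^{m-1}\binom{m-1}{r}(-1)^r(\sqrt{r+1}-\sqrt{r})$, the alternating sum resolving precisely the sign cancellations you flag: each group of terms in $(\Dm-\Wm)^m$ with $r$ occurrences of $\Wm$ contributes $\frac{\sqrt{r+1}-\sqrt{r}}{\sqrt{2\pi}}\int_{\partial S}p^{m+1}$. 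Your assertion that ``$C^{-1}\psi_m^{1/m}\to 1$'' is unjustified as stated (and false if taken literally, since $C\neq 1$ in general); what actually makes the $m$-th root of the constant tend to $1$ is the sub-exponential growth of $t(m)$. So the approach is the same, but the two places you defer to heuristics are exactly where the paper's two lemmas do the work, and the U-statistic reduction in particular is not optional.
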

Intuitively, the conditions 1--5 guarantee sparsity of the graph and govern the scaling of the bandwidth estimate order. The theorem essentially states that the estimate of the bandwidth of any indicator vector converges to the supremum of the underlying probability distribution on the corresponding decision boundary.
We now specify a graph construction scheme for which the result holds.
\begin{corollary} Equation~\eqref{eq:result} holds if for each value of $n$, we choose the parameters $\sigma$ and $m$ as follows
\begin{align}
\sigma &= n^{-x/(md+1)}, \quad 0 < x < 1, \\
m &= (\log{n})^y, \quad 1/2 < y < 1,
\end{align}
\end{corollary}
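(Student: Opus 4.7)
The plan is to verify the five conditions of Theorem~\ref{thm:main_thm} directly for the choices $\sigma = n^{-x/(md+1)}$ and $m = (\log n)^y$. The key observation is that the exponent of $\sigma$ is arranged so that $\sigma^{md+1} = n^{-x}$ exactly, which is what makes condition~5 (the most delicate one) tractable.

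I would first dispose of the easy conditions. Condition~1 is the hypothesis $n \to \infty$. For conditions~2 and~3, substituting $m = (\log n)^y$ gives $md+1 \sim d(\log n)^y$, hence
\begin{equation*}
\sigma = \exp\!\Bigl( -\tfrac{x}{d}(\log n)^{1-y}(1 + o(1)) \Bigr).
\end{equation*}
Since $y < 1$, this shows $\sigma \to 0$ (condition~2), and it also shows $m\sigma^2 = (\log n)^y \exp\!\bigl(-(2x/d)(\log n)^{1-y}(1+o(1))\bigr) \to 0$ because the exponential beats the polylogarithmic prefactor. The remaining parts of condition~3, namely $m \to \infty$ and $m/n \to 0$, are immediate.

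For condition~4, I would write $(1/\sigma)^{1/m} = \exp\!\bigl(x\log n/(m(md+1))\bigr)$; the inner exponent is of order $(x/d)(\log n)^{1-2y}$, so the assumption $y > 1/2$ is exactly what forces it to tend to zero. The main step is condition~5: using $\sigma^{md+1} = n^{-x}$ the ratio becomes $n^{1-x}/(mC^m)$, whose logarithm is
\begin{equation*}
(1-x)\log n - y\log\log n - (\log n)^y \log C.
\end{equation*}
Because $1-x > 0$ and $y < 1$, the linear-in-$\log n$ term dominates the $(\log n)^y$ term regardless of the sign of $\log C$, so the expression diverges to $+\infty$. For the almost-sure strengthening, the same calculation with an additional $-\log\log n$ inside the logarithm is absorbed by the same dominance argument.

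I do not anticipate a serious obstacle: the corollary is a parameter-matching exercise, and the whole point of the parametrization is to collapse $n\sigma^{md+1}$ to a pure power of $n$, so that the troublesome factor $C^m$ (which grows only sub-polynomially in $n$ because $m = (\log n)^y$ with $y < 1$) never competes with the explicit $n^{1-x}$. The only mildly subtle point is recognizing the role of the two endpoints of the range $1/2 < y < 1$: the lower bound is forced by condition~4, while the upper bound is needed both for $\sigma \to 0$ in condition~2 and for the dominance argument in condition~5.
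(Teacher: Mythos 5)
Your verification is correct and is exactly the intended argument: the paper states this corollary without proof because it reduces to checking conditions 1--5 of Theorem~\ref{thm:main_thm}, which is precisely what you do, including the key simplification $\sigma^{md+1} = n^{-x}$ and the correct identification of why each endpoint of $1/2 < y < 1$ is used. The only negligible quibble is that since $C = 2/(2\pi)^{d/2} < 1$ for all $d \geq 1$, the $C^m$ factor actually helps in condition~5, so $y<1$ is truly forced only by conditions~2--4; your sign-agnostic dominance argument is nonetheless valid and more robust.
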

\noindent This result, along with the conclusions derived from the sampling theorem for graph signals in the previous section, forms the basis of justifying BIG as an effective method for SSL:
\emph{Given enough and appropriately chosen labeled data, BIG learns that decision boundary on which the supremum of the data density is minimum.}
Based on this, the following conclusions become apparent:
\begin{enumerate}[topsep=1pt,itemsep=0pt,leftmargin=15pt]
\item BIG is a variant of the constrained low density separation problem for finite number of data points, similar to other methods.
\item To learn a boundary that passes through a region of high probability density, more labeled data is required.
\end{enumerate}
\subsection{Proof sketch}
We now give an overview of the proof of Theorem \ref{thm:main_thm}. For our analysis, we consider the quantity $Y_m$ defined for $m \in \mathbb{Z}^+$ as:
\begin{equation}
Y_m = \frac{1}{\sigma} \left( \frac{\onev_S^T \Lm^m \onev_S}{\onev_S^T \onev_S} \right).
\end{equation}
We prove the following convergence result:
\begin{align}
\left( Y_m \right)^{1/m} &\xrightarrow{\;\;\textit{p.}\;} \left(\E{ Y_m }\right)^{1/m} \longrightarrow \;\; \sup_{\sv \in \partial S} \; p(\sv),
\label{eq:stoc_conv}
\end{align} 
where the second arrow denotes sure (deterministic) convergence. Since $(1/\sigma)^{1/m}$ $\rightarrow 1$ (condition 4), we can reach the desired result of~\eqref{eq:result} from~\eqref{eq:stoc_conv} through a simple argument.
Before providing a sketch of the proof for~\eqref{eq:stoc_conv}, we first discuss how they rely on the conditions in the Theorem's statement. 
Conditions 1 and 5 are required to ensure stochastic convergence of the left hand side of \eqref{eq:stoc_conv}.
Conditions 2 and 3 are required to show sure convergence of the right hand side of \eqref{eq:stoc_conv}. 
The proof of~\eqref{eq:stoc_conv} begins by re-expressing $Y_m$ as $\frac{\frac{1}{n\sigma} \onev_S^T \Lm^m \onev_S}{\frac{1}{n} \onev_S^T \onev_S}$, and studying the convergence of the numerator and denominator separately. By the strong law of large numbers, we conclude that
\begin{equation}
\frac{1}{n} \onev_S^T \onev_S \; \xrightarrow[]{a.s.} \; \int_S p(\xv)d\xv.
\end{equation}
For the numerator, we decompose it into two parts -- a variance term for which we show stochastic convergence and a bias term for which we prove deterministic convergence. Let $V = \frac{1}{n\sigma} \onev_S^T \Lm^m \onev_S$, then we have the following results for $V$ and $\E{V}$:
\begin{lemma}[Concentration]
For every $\epsilon > 0$, we have:
\begin{align}
&\Pr{\left( \left| V - \E{V } \right| > \epsilon \right)} \nonumber \\
& \quad \quad \leq 2 \exp{ \left( \frac{-[n/(m+1)]\sigma^{md+1}\epsilon^2}{2C^m \E{V} + \frac{2}{3}\left| C^m - \sigma^{md+1} \E{V}\right|\epsilon} \right) },
\label{eq:v_conc}
\end{align}
where $C = 2/(2\pi)^{d/2}$. Note that the right hand side goes to $0$ when condition 5 holds.
\end{lemma}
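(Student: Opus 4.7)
The plan is to represent $V=\frac{1}{n\sigma}\onev_S^T\Lm^m\onev_S$ as (essentially) an order-$(m+1)$ $U$-statistic in the i.i.d.\ sample $\Xm_1,\ldots,\Xm_n$, and then to obtain the stated Bernstein-type tail bound via Hoeffding's decomposition into averages over disjoint blocks.

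First I would expand
$$
V=\frac{1}{n^{m+1}\sigma}\sum_{i_0,\ldots,i_m}\onev_S(i_0)\onev_S(i_m)\prod_{k=1}^{m}(\Dm-\Wm)_{i_{k-1}i_k},
$$
and use $0\le w_{ij}\le (2\pi\sigma^2)^{-d/2}$ together with $\Dm_{ii}=\sum_j w_{ij}$ to derive the entrywise bound $\|\Lm\|_\infty\le C/\sigma^d$, which yields $0\le V\le C^m/\sigma^{md+1}=:M$ (nonnegativity of $V$ uses $\Lm\succeq 0$). After absorbing the free ``degree-summation'' indices generated by the diagonal entries of $\Dm$ into inner conditional expectations over the remaining sample, $V$ is expressed as the value of a symmetric order-$(m+1)$ $U$-statistic in $\Xm_1,\ldots,\Xm_n$ with a kernel $h_\sigma$ satisfying $0\le h_\sigma\le M$ and $\E{h_\sigma}=\E{V}$.

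Next, I would invoke Hoeffding's classical identity to write $V=\frac{1}{n!}\sum_{\pi\in S_n}W_\pi$, where
$$
W_\pi=\frac{1}{k}\sum_{j=0}^{k-1}h_\sigma\bigl(\Xm_{\pi(j(m+1)+1)},\ldots,\Xm_{\pi((j+1)(m+1))}\bigr)
$$
is an average of $k:=\lfloor n/(m+1)\rfloor$ i.i.d.\ block-kernel evaluations. For a fixed $\pi$, the summands are bounded by $M$, have common mean $\E{V}$, and variance at most $M\,\E{V}$ (from $h_\sigma^2\le M\,h_\sigma$), so Bernstein's inequality yields
$$
\Pr\bigl(|W_\pi-\E{V}|>\epsilon\bigr)\le 2\exp\left(\frac{-k\epsilon^2/2}{M\,\E{V}+\tfrac{1}{3}(M-\E{V})\epsilon}\right).
$$
The standard Chernoff--Jensen trick $\E{e^{t(V-\E{V})}}\le\frac{1}{n!}\sum_\pi\E{e^{t(W_\pi-\E{V})}}=\E{e^{t(W_1-\E{V})}}$ (using that all $W_\pi$ have the same distribution under the i.i.d.\ assumption on $X$) then transfers the bound to $V$. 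Plugging in $M=C^m/\sigma^{md+1}$ and clearing a factor $\sigma^{md+1}$ from the numerator and denominator of the exponent reproduces exactly~\eqref{eq:v_conc}, with the conclusion under condition~5 being immediate by inspection.

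The main obstacle will be the clean order-$(m+1)$ kernel reduction in the first step: because each diagonal entry of $\Lm$ aggregates over a free index, a literal expansion of $\Lm^m$ generates terms of effective $U$-statistic order up to $2m+1$. I expect the cleanest treatment is to interpret these free indices as inner expectations computed before Hoeffding's identity is applied, so that the outer kernel depends on only $m+1$ data points while retaining $0\le h_\sigma\le M$ and the variance bound $M\,\E{V}$ that is needed to match the stated Bernstein exponent; the residual correction from tuples with repeated indices is $O(m/n)$ and becomes negligible under condition~3.
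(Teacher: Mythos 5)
Your overall architecture coincides with the paper's: expand $V$ as an order-$(m+1)$ V-statistic, convert it to a U-statistic by regrouping tuples with repeated indices (your $O(m/n)$ correction), bound the kernel by $M = C^m/\sigma^{md+1}$ and its variance by $M\,\E{V}$ via $h_\sigma^2 \le M h_\sigma$, and apply a Bernstein-type inequality for U-statistics. The paper simply cites Hoeffding's U-statistic inequality at this point, whereas you re-derive it through the block decomposition and the Chernoff--Jensen averaging over permutations; that is the standard proof of the cited inequality, not a genuinely different route, and your constants reproduce \eqref{eq:v_conc} correctly.

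The one step that would fail as written is your treatment of the degree terms. First, the premise motivating it is false: in the expansion of $\onev_S^T(\Dm-\Wm)^m\onev_S$ every factor, whether $\Wm_{i_{k-1}i_k}$ or $\Dm_{i_{k-1}i_{k-1}} = \sum_j w_{i_{k-1}j}$, introduces exactly one new summation index (the off-diagonal entry advances the chain to $i_k$, while the diagonal entry pins $i_k = i_{k-1}$ via a Kronecker delta and supplies one fresh dummy $j$), so after collapsing the deltas every term depends on exactly $m+1$ data points; the effective order never reaches $2m+1$, and there is no obstruction to remove. Second, the remedy you propose --- absorbing the degree-summation indices into ``inner conditional expectations over the remaining sample'' --- replaces the random degree $\frac{1}{n}\sum_j w_{ij}$ by its conditional mean, which changes the random variable $V$ itself: the concentration bound would then be established for a surrogate of $V$, and you would owe a separate (and unprovided) argument controlling the deviation between that surrogate and $V$. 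The correct and simpler course is the paper's: take the kernel $g$ (your $h_\sigma$) to be the resulting sum of $2^m$ products of $m$ kernel evaluations in $m+1$ arguments, which already satisfies $0 \le g \le C^m/\sigma^{md+1}$, and proceed exactly as in the remainder of your argument, which is sound.
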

\begin{proof}[Proof sketch]
We begin by expanding $V$ as follows:
\begin{align}
V &= \frac{1}{n\sigma} \onev_S^T ( \Dm - \Wm)^m \onev_S \\
%
%
&= \frac{1}{n^{m+1}} \sum_{i_1,i_2,\dots,i_{m+1}} g\left( \Xm_{i_1},\Xm_{i_2},\dots,\Xm_{i_{m+1}} \right).
\end{align}
The above expansion has the form of a V-statistic. Recalling that $w_{i,j} = K(\Xm_i,\Xm_j)$, we note that $g$ is composed of a sum of $2^m$ terms, each a product of $m$ kernel functions. Therefore,
\begin{equation}
g \leq \frac{1}{\sigma}2^m\|K\|_\infty^m = \frac{1}{\sigma}\left(\frac{2}{(2\pi\sigma^2)^{d/2}}\right)^m = \frac{C^m}{\sigma^{md+1}}.
\label{eq:g_bound}
\end{equation}
In order to apply a concentration inequality for V, we first re-write it in the form of a U-statistic by regrouping terms in the summation so that repeated indices are removed, as given in~\cite{hoeffding_63}:
\begin{align}
V &= \frac{1}{n^{(m+1)}} \sum_{(n,m+1)} g^*\left( \Xm_{i_1},\Xm_{i_2},\dots,\Xm_{i_{m+1}} \right),
\end{align}
where $\sum_{(n,m+1)}$ denotes summation over all (m+1)-tuples of distinct indices taken from the set $\{1,\dots,n\}$, $n^{(m+1)} = n.(n-1)\dots(n-m)$ is the number of (m+1)-permutations of $n$ and $g^*$ is a convex combination of certain values of $g$ that absorbs repeating indices satisyfing the property:
\begin{align}
g^*\left( \xv_1,\xv_2,\dots,\xv_{m+1} \right) &= \frac{n^{(m+1)}}{n^{m+1}} g\left( \xv_1,\xv_2,\dots,\xv_{m+1} \right) \\
&+ \Order{\frac{m}{n}} \text{(terms with repeated indices)}. \nonumber
\end{align}
Therefore, $g^*$ has the same upper bound as that of $g$ derived in \eqref{eq:g_bound}. Moreover, using the fact that $\E{V} = \E{g^*}$, we can bound the variance of $g^*$ as
\begin{equation}
\Var{g^*} \leq \E{(g^*)^2}  \leq \|g^*\|_\infty \E{g^*} = \frac{C^m}{\sigma^{md+1}} \E{V}.
\end{equation}
Finally, plugging in the bound and variance of $g^*$ in Bernstein's inequality for U-statistics~\cite{hoeffding_63,hein_thesis_06}, we arrive at the result of \eqref{eq:v_conc}.
\end{proof}
\begin{lemma}[convergence of bias] 
As $n \rightarrow \infty$, $\sigma \rightarrow 0$ and $m\sigma^2 \rightarrow 0$, we have
\begin{equation}
\E{V} \longrightarrow \frac{t(m)}{\sqrt{2\pi}} \int_{\partial S} p^{m+1}(\sv)d\sv,
\end{equation}
where $t(m) = \sum_{r = 1}^{m-1} {m-1 \choose r} (-1)^r (\sqrt{r+1} - \sqrt{r})$.
\end{lemma}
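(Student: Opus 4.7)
The plan is to expand $(\Dm-\Wm)^m$ over sequences of $\Dm$'s and $\Wm$'s, recognize each term as the same V-statistic used in the concentration lemma, and then apply Laplace's method in the limit $\sigma\to 0$ to localize contributions to a tubular neighborhood of $\partial S$. Specifically, I would write
\[
\onev_S^T(\Dm-\Wm)^m\onev_S \;=\; \sum_{\alpha\in\{D,W\}^m} (-1)^{|\alpha|_W}\,\onev_S^T\!\prod_{k=1}^m\!\Am(\alpha_k)\,\onev_S,
\]
with $\Am(D)=\Dm$ and $\Am(W)=\Wm$. Absorbing each $D_{ii}=\sum_j w_{ij}$ into an auxiliary summation, a sequence with $W$'s at positions $k_1<\cdots<k_r$ becomes a V-statistic in $m+1$ indices: a ``backbone'' walk $v_0,\dots,v_r$ (carrying the two $\onev_S$ factors at the endpoints) joined by $r$ kernels, plus $m-r$ ``leaves'' attached at backbone nodes by extra kernels. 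Taking expectation gives an $(m+1)$-fold integral against $p^{\otimes(m+1)}$, with repeated-index corrections of size $O(m/n)=o(1)$ under condition 3.

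Integrating out the leaves is the easy step: smoothness of $p$ and $m\sigma^2\to 0$ give $\int K_{\sigma^2}(\xv,\yv)p(\yv)\,d\yv = p(\xv)+O(\sigma^2)$, so every leaf contributes a factor $p(v_t)$ at its parent. Introducing tubular coordinates $v_t=\sv_t+t_t\nu(\sv_t)$ around $\partial S$, the tangential Gaussian integrals each give $1$, all $\sv_t$ coalesce to a common base point $\sv\in\partial S$, and the product of $p$-factors becomes $p(\sv)^{m+1}$, independent of how the leaves are distributed on the backbone. Rescaling $t_t=\sigma u_t$ reduces the normal direction to a 1D Gaussian random walk, so the leading $\sigma\to 0$ behaviour of each sequence's contribution $I_\alpha$ is
\[
\tfrac{1}{\sigma}I_\alpha \;\longrightarrow\; \int_{\partial S}\!p^{m+1}(\sv)\,d\sv \cdot \!\int_{-\infty}^0\!\Pr(u+S_r<0)\,du,
\]
with $S_r=\sum_{t=0}^{r-1}\Delta_t\sim N(0,r)$ and $S_0\equiv 0$.

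The final step is a bulk cancellation, since the 1D integral diverges at $u\to-\infty$ for every $r$. Truncating at $u=-M$,
\[
\int_{-M}^0 \Pr(u+S_r<0)\,du \;=\; M - \sqrt{r/(2\pi)} + o(1),
\]
using $\int_0^\infty \Pr(S_r\geq v)\,dv = \E{\max(0,S_r)} = \sqrt{r/(2\pi)}$. There are $\binom{m}{r}$ sequences with $r$ $W$'s (one per composition of $m-r$ into $r+1$ slots), and $\sum_{r=0}^m(-1)^r\binom{m}{r}=0$ annihilates the divergent $M$-parts, leaving
\[
\E{V}\longrightarrow -\tfrac{1}{\sqrt{2\pi}}\sum_{r=0}^m(-1)^r\binom{m}{r}\sqrt{r}\,\int_{\partial S}\!p^{m+1}(\sv)\,d\sv.
\]
Pascal's identity $\binom{m}{r}=\binom{m-1}{r-1}+\binom{m-1}{r}$, followed by the reindexing $r\mapsto r+1$ in the first piece, converts this signed sum into $t(m)=\sum_r\binom{m-1}{r}(-1)^r(\sqrt{r+1}-\sqrt{r})$, completing the claim.

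The main obstacle will be making the bulk cancellation rigorous: each sequence individually contributes a divergent 1D integral, and one must interchange the sum over $2^m$ terms with the $u\to-\infty$ and $\sigma\to 0$ limits uniformly (e.g., by fixing $M$, summing the $2^m$ truncated integrals, and only then sending $M\to\infty$ and $\sigma\to 0$). One must also check that the $O(\sigma^2)$ errors from leaf integration and tubular linearization---accumulated across $2^m$ sequences---remain subleading, which is precisely what condition 3 ($m\sigma^2\to 0$) is there to ensure.
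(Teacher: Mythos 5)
Your argument is correct in substance and follows the same skeleton as the paper's proof --- expand $(\Dm-\Wm)^m$ over placements of $\Wm$, integrate out the $\Dm$ ``leaves'' via $\int K_{\sigma^2}(\xv,\yv)p(\yv)d\yv=p(\xv)+O(\sigma^2)$, and reduce everything to a one-dimensional Gaussian boundary-layer computation near $\partial S$ --- but it organizes the crucial bulk cancellation differently. The paper factors $\Lm^m=(\Dm-\Wm)^{m-1}(\Dm-\Wm)$ and uses the convolution identity $\int K_{a\sigma^2}(\xv,\zv)K_{b\sigma^2}(\zv,\yv)p(\zv)d\zv=K_{(a+b)\sigma^2}(\xv,\yv)\,p(\cdot)+O(\sigma^2)$ to collapse each group of terms with $r$ kernels among the first $m-1$ factors into a \emph{difference} $\tfrac{1}{\sigma}\int_S\int_S K_{r\sigma^2}(\cdots)-\tfrac{1}{\sigma}\int_S\int_S K_{(r+1)\sigma^2}(\cdots)$, whose $\tfrac{1}{\sigma}\int_S p^{m+1}$ bulk parts cancel pairwise, so that each group converges on its own to $\tfrac{\sqrt{r+1}-\sqrt{r}}{\sqrt{2\pi}}\int_{\partial S}p^{m+1}(\sv)d\sv$. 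You instead keep all $2^m$ sequences separate, each with an individually divergent bulk contribution, and cancel globally via $\sum_r(-1)^r\binom{m}{r}=0$ before recovering the paper's grouping a posteriori through Pascal's identity; the uniform-truncation issue you flag as your main obstacle is precisely what the paper's pairing dissolves, so you should adopt that grouping (your normal-coordinate random-walk computation is the same calculation as iterating the convolution identity, since $S_r\sim N(0,r)$ is the $r$-fold convolution of the kernel in the normal direction, and your identity $\int_0^\infty\Pr(S_r\geq v)dv=\sqrt{r/(2\pi)}$ is exactly the source of the $\sqrt{r}$ factors). One discrepancy to reconcile: your final signed sum runs from $r=0$ and therefore carries the extra term $\binom{m-1}{0}(\sqrt{1}-\sqrt{0})=1$ relative to the paper's $t(m)$, which starts at $r=1$; this constant offset does not affect the $m\to\infty$ limit invoked in the main theorem, but you should verify whether the all-$\Dm$ sequence genuinely contributes or is suppressed (e.g.\ by the no-self-loop convention $w_{ii}=0$).
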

\begin{proof}[Proof sketch]
We use the following properties of $K_{\sigma^2}(\xv,\yv)$:
\begin{align}
\int K_{\sigma^2}(\xv,\yv)p(\yv)d\yv &= p(\xv) + \Order{\sigma^2}, 
\label{eq:K_int1}
\\
\int K_{a\sigma^2}(\xv,\zv) K_{b\sigma^2}(\zv,\yv) p(\zv) d\zv &= K_{(a+b)\sigma^2}(\xv,\yv) \; p\left( \frac{b \xv + a \yv}{a + b} \right) \nonumber \\
&\quad+ \Order{\sigma^2}.
\label{eq:K_int2}
\end{align}
We evaluate $\E{V}$ term by term by writing $\Lm^m = (\Dm-\Wm)^{m-1}(\Dm-\Wm)$. For all terms in the expansion of $(\Dm-\Wm)^{m-1}$ containing $r$ occurrences of $\Wm$, we use \eqref{eq:K_int1} and \eqref{eq:K_int2} and $m\sigma^2 \rightarrow 0$ to get
\begin{align}
&\E{\frac{1}{n\sigma} \yv^T [\Dm^{m-1-r},\Wm^r] (\Dm-\Wm) \yv} \nonumber \\
&= \frac{1}{\sigma} \int_S \int_S K_{r\sigma^2}(\xv,\yv) p^\alpha(\xv) p^\beta(\yv) d\xv d\yv \nonumber \\
&\quad- \frac{1}{\sigma} \int_S \int_S K_{(r+1)\sigma^2}(\xv,\yv) p^{\alpha'}(\xv) p^{\beta'}(\yv) d\xv d\yv + O(\sigma),
\label{eq:rth_term}
\end{align}
where $\alpha + \beta = m+1$ and $\alpha' + \beta' = m+1$. It can be shown that the right hand side of \eqref{eq:rth_term} converges to $\frac{\sqrt{r+1}-\sqrt{r}}{\sqrt{2\pi}}\int_{\partial S} p^{m+1}(\sv) d\sv$. Putting everything together, we get the desired result.
\end{proof}
\noindent Finally, we note that as $m \rightarrow \infty$, we have
\begin{equation}
\left( \frac{\frac{t(m)}{\sqrt{2\pi}} \int_{\partial S} p^{m+1}(\sv)d\sv}{ \int_S p(\xv)d\xv} \right)^{1/m} \longrightarrow \sup_{\sv \in \partial S} p(\sv).
\end{equation}
\section{Experimental results}
In this section, we numerically analyze our asympotic results and show that they are also useful in practice. For our experiments, we considered a 2-D Gaussian mixture model with three Guassians: $\mu_1 = [-2, 0], \Sigma_1 = 0.64\Id$, $\mu_2 = [0, 0], \Sigma_2 = 0.25\Id$ and $\mu_3 = [2, 0], \Sigma_3 = 0.16 \Id$, with corresponding mixture proportions: $\alpha_1 = 0.5, \alpha_2 = 0.2, \alpha_3 = 0.3$. 
The plot of the density is given in Figure~\ref{fig:pdf_cut}. For computing edge weights of the graph, we set $\sigma = 0.1$. 
\par
In our first experiment, we studied the behavior of the empirical bandwidth estimate $\omega_m(\onev_S)$ with $n$ for different values of $m$. We used sample sizes varying from $n = 500$ to $n = 2500$, drawn \emph{i.i.d.} from the pdf, to compute $\omega_m(\onev_S)$ with $m=10,20,30$ for the 2D hyperplane $\partial S: x = 0$. This experiment was repeated 100 times and the mean was compared with the supremum of the boundary (Figure~\ref{fig:cut_vs_m}). We observe that as $m$ increases, the mean empirical bandwidth estimate approaches the theoretical limit (for a fixed $m$, the mean value decreases slightly with $n$ since for a higher $n$, the rate of convergence of $\omega_m(\onev_\Sc)$ with $m$ is slower). Further, as $n$ increases, the standard deviation of the empirical bandwidth decreases, indicating asymptotic convergence of the empirical quantity.
\par
Next, we validate the result of Theorem~\ref{thm:main_thm} for different boundaries.
This is carried out as follows: we fix the bandwidth approximation factor to $m = 20$ and compare $\omega_m(\onev_S)$ with $\sup_{\sv \in \partial S} \; p(\sv)$, for different positions of the boundary $\partial S: x = c$ (obtained by sweeping $c$ as shown in Figure~\ref{fig:pdf_cut}). 
This procedure is carried out 100 times and the results are shown in Figure~\ref{fig:cut_results}. We observe that the empirical and the limit values are fairly close to the supremum of $p(\xv)$ over the boundary, the slight gap arises due to finite $m$. The overshoot of the empirical quantity over the supremum for some positions of the boundary happens because $\sigma$ is not small enough for convergence of the bias term at those parameter settings.
\vspace{-0.3cm}
\begin{figure}[htb]
\centering
\includegraphics[width=6.5cm]{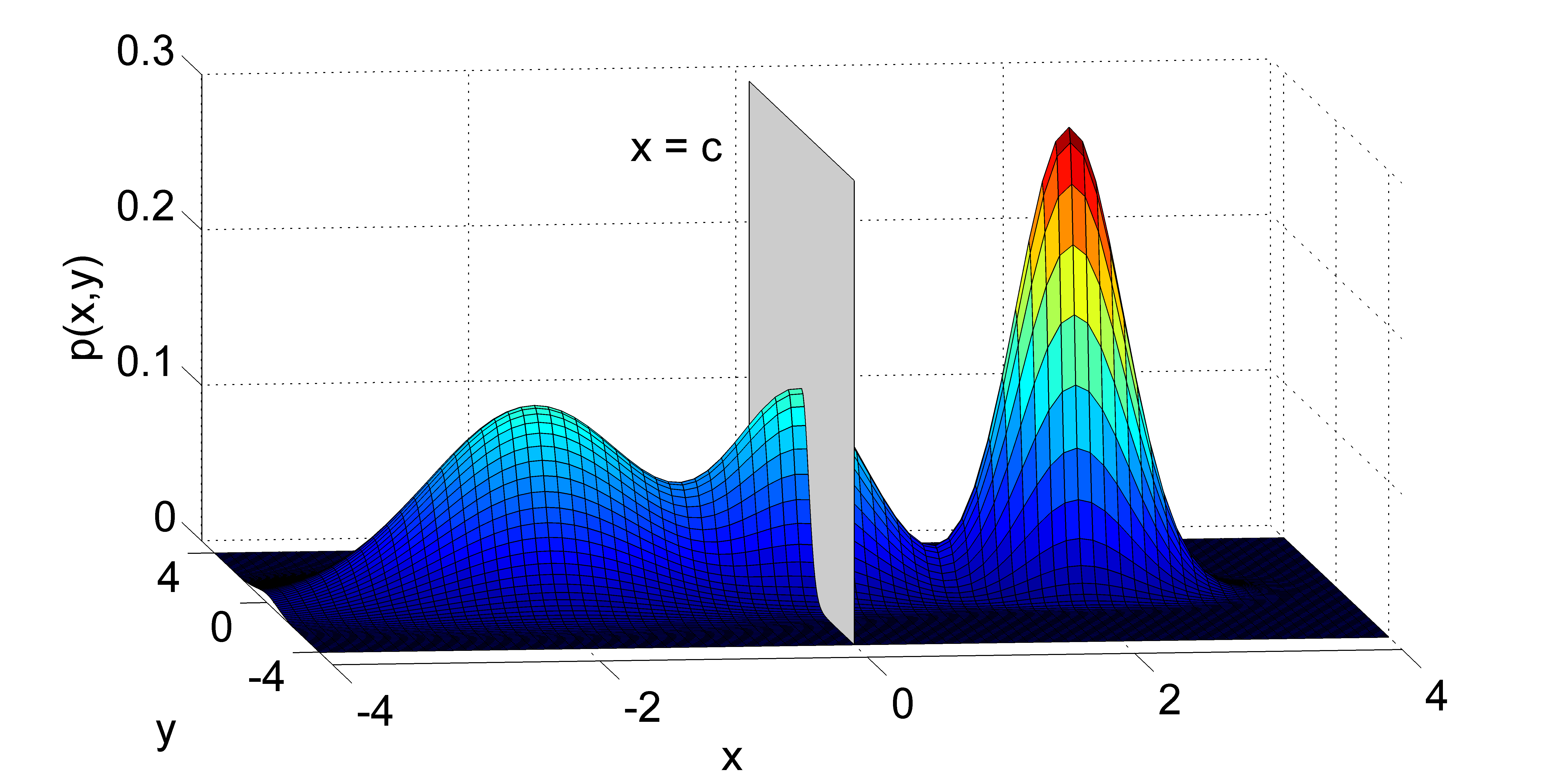}
\caption{2D GMM used in experiments. Family of hyperplanes $x = c$ that cut perpendicular to the first dimension (the ``informative" dimension for the pdf) are taken as decision boundaries $\partial S$. }
\label{fig:pdf_cut}
\end{figure}
\vspace{-0.3cm}
\begin{figure}[htb]
\centering
\includegraphics[width=8cm]{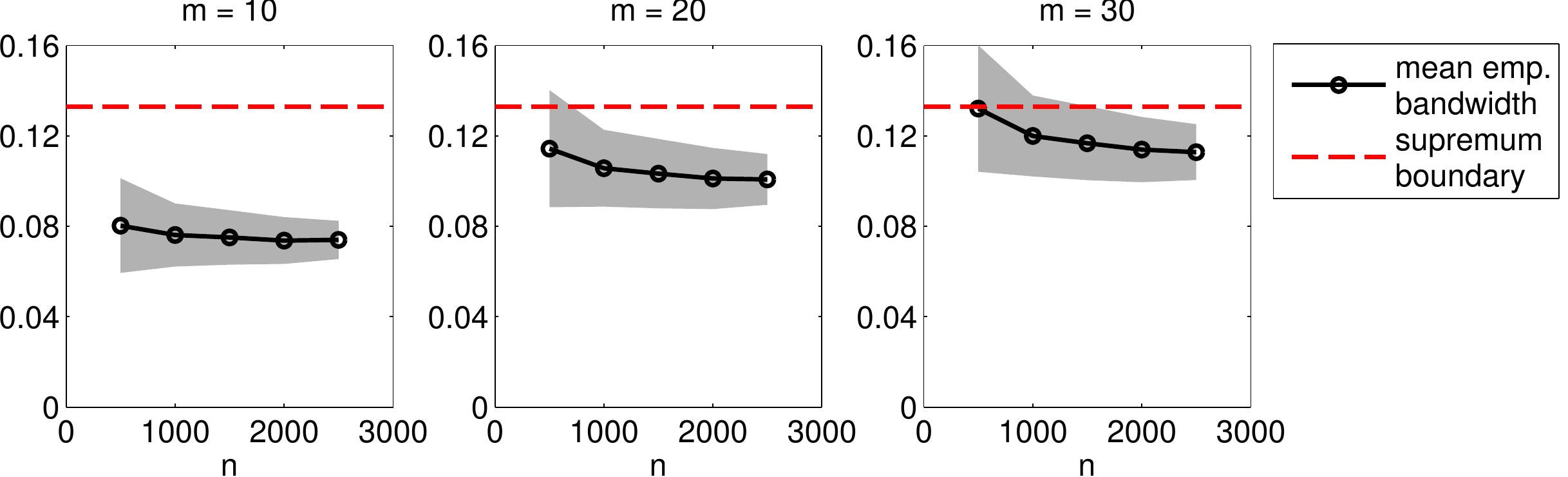}
\caption{Convergence of $\omega_m(\onev_S)$ with $n$ for the boundary $\partial S: x = 0$ 
and different $m$.
$\sigma$ is fixed at $0.1$. Shaded area indicates standard deviation over 100 experiments. Red-dashed line shows $\sup_{\sv \in S} p(\sv)$.}
\label{fig:cut_vs_m}
\end{figure}
\vspace{-0.4cm}
\begin{figure}[htb]
\centering
\includegraphics[width=6cm]{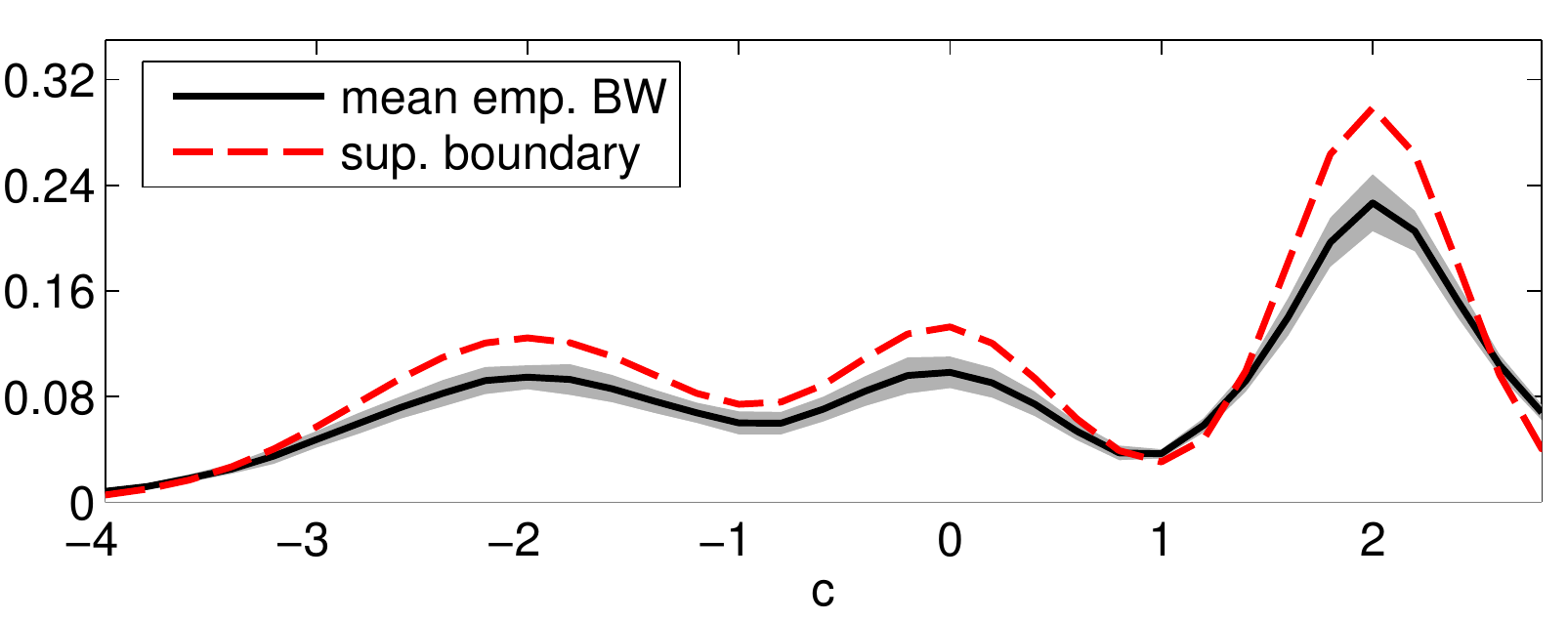}
\caption{Convergence of $\omega_m(\onev_S)$ with $m = 20$ for varying hyperplane parameter $c$. $n$ and $\sigma$ are fixed at 2500 and 0.1. Shaded area indicates standard deviation over 100 experiments. Red-dashed line shows $\sup_{\sv \in S} p(\sv)$.}
\label{fig:cut_results}
\end{figure}
\vspace{-0.5cm}
\section{Summary}
In this paper, we provided an asymptotic justification of using the bandlimited interpolation of graph signals (BIG) approach for semi-supervised learning (SSL).
We considered a statistical setting and computed the limiting value of the bandwidth estimate for any indicator signal defined on a distance-based similarity graph that is fairly common in practice.
As a consequence of our result and the sampling theory for graph signals, the BIG approach for SSL is found to be closely related to the low density separation problem. 
We show through experimental analysis that the theoretical results are useful in practical scenarios.
In future work, we aim to exploit this result for finding the label complexity of any indicator signal in the ``BIG for SSL" framework, and comparing the BIG approach with existing methods, to further understand the value of labeled data.

\bibliographystyle{IEEEbib}
\bibliography{paper_v11}

\end{document}